\documentclass{article}

% if you need to pass options to natbib, use, e.g.:
%     \PassOptionsToPackage{numbers, compress}{natbib}
% before loading neurips_2020

% ready for submission
% \usepackage{neurips_2020}

% to compile a preprint version, e.g., for submission to arXiv, add add the
% [preprint] option:
%     \usepackage[preprint]{neurips_2020}

% to compile a camera-ready version, add the [final] option, e.g.:
%     \usepackage[final]{neurips_2020}

\usepackage[nonatbib,preprint]{neurips_2020}
\usepackage[square,numbers]{natbib}
\usepackage{wrapfig}
\bibliographystyle{unsrtnat}
\usepackage[utf8]{inputenc} % allow utf-8 input
\usepackage[T1]{fontenc}    % use 8-bit T1 fonts
\usepackage{hyperref}       % hyperlinks
\usepackage{url}            % simple URL typesetting
\usepackage{booktabs}       % professional-quality tables
\usepackage{amsfonts}       % blackboard math symbols
\usepackage{nicefrac}       % compact symbols for 1/2, etc.
\usepackage{microtype}      % microtypography
% latex math commands
% Vladimir Feinberg
% \vx notation for vectors from Goodfellow
% https://github.com/goodfeli/dlbook_exercises

% alphabet templates
% abcdefghijklmnopqrstuvwxyz
% ABCDEFGHIJKLMNOPQRSTUVWXYZ

% fonts, math, and layout 
\usepackage{bbm}
\usepackage{enumerate}
\usepackage{amsmath}
\usepackage{amsthm}
\usepackage{amssymb}
\usepackage{amsfonts}
\usepackage{mathrsfs}
\usepackage{mathtools}
\usepackage[all]{xy}

% include graphics with \includegraphics
\usepackage{graphicx}
\usepackage{caption}

% \nicefrac{x}{y} gives a diagonal fraction bar x/y
\usepackage{nicefrac}

% brackets, norms, cardinalities
\newcommand{\pa}[1]{ \left({#1}\right) }
\newcommand{\ha}[1]{ \left[{#1}\right] }
\newcommand{\ca}[1]{ \left\{{#1}\right\} }
\newcommand{\inner}[1]{\left\langle #1 \right\rangle}

\newcommand{\norm}[1]{\left\lVert #1 \right\rVert}
\newcommand{\card}[1]{\left\lvert{#1}\right\rvert}
\newcommand{\abs}[1]{\card{#1}}

% math vectors

\newcommand{\vv}{\mathbf{v}}

\newcommand{\vx}{\mathbf{x}}
\newcommand{\vy}{\mathbf{y}}

% math function arrows, misc binary math ops

% common math sets

\newcommand{\R}{\mathbb{R}}

\newcommand{\N}{\mathbb{N}}

% limits

% mathcal

\newcommand{\mcF}{\mathcal{F}}
\newcommand{\mcG}{\mathcal{G}}

% measure theory, distributions

\DeclarePairedDelimiterX{\infdivx}[2]{(}{)}{ #1\;\delimsize\|\;#2 }

% distributions
\makeatletter
\newcommand{\distas}[1]{\mathbin{\overset{#1}{\kern\z@\sim}}}%
\makeatother
\newcommand{\disteq}{\overset{d}{=}}

 % need to overwrite stupid paragraph symbol
\DeclareMathOperator\mathExp{\mathbb{E}}

\newcommand{\E}{\mathExp}

% \set{from set}{condition} with set-builder notation
% conditional expectation is analogous
\newcommand{\set}[2]{ \left\{ #1 \,\middle|\, #2 \right\} }
\newcommand{\CE}[2]{ \mathExp\left[ #1 \,\middle|\, #2 \right] }

% linear-algebra related

 % convex hull

\DeclareMathOperator*{\argmax}{argmax}

% complex analysis

% typical numerical operators

% graphs

% constants

% \bigtimes: large indexed cross product
\makeatletter
\DeclareFontFamily{U}  {MnSymbolF}{}
\DeclareSymbolFont{symbolsMN}{U}{MnSymbolF}{m}{n}
\SetSymbolFont{symbolsMN}{bold}{U}{MnSymbolF}{b}{n}
\DeclareFontShape{U}{MnSymbolF}{m}{n}{
    <-6>  MnSymbolF5
   <6-7>  MnSymbolF6
   <7-8>  MnSymbolF7
   <8-9>  MnSymbolF8
   <9-10> MnSymbolF9
  <10-12> MnSymbolF10
  <12->   MnSymbolF12}{}
\DeclareFontShape{U}{MnSymbolF}{b}{n}{
    <-6>  MnSymbolF-Bold5
   <6-7>  MnSymbolF-Bold6
   <7-8>  MnSymbolF-Bold7
   <8-9>  MnSymbolF-Bold8
   <9-10> MnSymbolF-Bold9
  <10-12> MnSymbolF-Bold10
  <12->   MnSymbolF-Bold12}{}
\DeclareMathSymbol{\tbigtimes}{\mathop}{symbolsMN}{2}
\newcommand*{\bigtimes}{%
  \DOTSB
  \tbigtimes
  \slimits@ 
}
\makeatother

% category theory arguments
% See https://tex.stackexchange.com/questions/356873

% restriction of a function to a domain
\newcommand\restr[2]{{% we make the whole thing an ordinary symbol
  \left.\kern-\nulldelimiterspace % automatically resize the bar with \right
  #1 % the function
  \vphantom{\big|} % pretend it's a little taller at normal size
  \right|_{#2} % this is the delimiter
  }}
\usepackage{amsthm}
\newtheorem{theorem}{Theorem}

\usepackage{xcolor}

\title{Chromatic Learning for Sparse Datasets}

\author{%
    Vladimir Feinberg\\
    Sisu Data\\
    \texttt{vlad@sisu.ai}\\
    \And
    Peter Bailis\\
    Sisu Data\\
    \texttt{peter@sisu.ai}
}

\begin{document}

\maketitle

\begin{abstract}
Learning over sparse, high-dimensional data frequently necessitates the use of specialized methods such as the hashing trick. In this work, we design a highly scalable alternative approach that leverages the low degree of feature co-occurrences present in many practical settings. This approach, which we call Chromatic Learning (CL), obtains a low-dimensional dense feature representation by performing graph coloring over the co-occurrence graph of features---an approach previously used as a runtime performance optimization for GBDT training \cite{lgbm}. This color-based dense representation can be combined with additional dense categorical encoding approaches, e.g., submodular feature compression, to further reduce dimensionality \cite{naivesub}. CL exhibits linear parallelizability and consumes memory linear in the size of the co-occurrence graph. By leveraging the structural properties of the co-occurrence graph, CL can compress sparse datasets, such as KDD Cup 2012, that contain over 50M features down to 1024, using an order of magnitude fewer features than frequency-based truncation and the hashing trick while maintaining the same test error for linear models. This compression further enables the use of deep networks in this wide, sparse setting, where CL similarly has favorable performance compared to existing baselines for budgeted input dimension.
\end{abstract}

\section{Introduction}

Extremely sparse, high-dimensional datasets pose significant challenges to resource-efficient learning. In practice, these sparse datasets often arise by combining several disparate data sources, resulting in set-valued features \cite{haldar2019applying, guo2016entity, covington2016deep}. Representing small subsets from a large set with a binary characteristic vector results in many zero-valued entries. For example, \texttt{kdd12}, a popular user behavior prediction dataset, describes topics a user has liked and other users they follow. The dataset has over 50M features in total but each row has at most 10 non-zero features. Many methods for learning over dense inputs, such as neural networks, remain largely intractable in this non-sequential sparse setting: even when restricting to the most frequent 1M features of \texttt{kdd12}, the Wide and Deep architecture \cite{widedeep} requires over 16GB of GPU memory for a 256-element minibatch, meaning these architectures computationally do not scale to the highly sparse regime. Recent work has developed techniques to learn directly from sparse datasets by encouraging model sparsity, such as Neural Factorization Machines \cite{nfm}, but still requires inputs of fewer than 100K features due to memory constraints on input representation given modern hardware. % \sbnote{In sentence 2 you don't need an example after "set valued features" because the next sentence is an example. I think you should include an explanation of what a set-valued feature means. What is the point of the last sentence, that the method is computationally intractable with so many features? you should say that. Also are these methods accurate if it did scale? }

%The \emph{hashing trick}, which performs dimensionality reduction via sparse random projection, provides a convenient and efficient means of learning on sparse data. Despite pessimistic bounds on the efficacy of random projections \cite{larsen2017optimality}, the hashing trick has found wide success and adoption in practice through software systems such as Vowpal Wabbit \cite{vw}.% \pbnote{What do you want to say about other techniques like FMs? need to combine with the above para comment, I could see it going in either para, but probably this second one} 

In this work, we leverage the structure of many real-world sparse datasets to demonstrate a novel dimensionality reduction technique. We observe that pairs of features rarely co-occur; in
\texttt{kdd12} over 99.9997\% of all possible feature pairs never appear in the any example simultaneously. LightGBM~\cite{lgbm} uses this observation as a means of improving the runtime performance of training gradient-boosted decision trees (GBDTs). We generalize this approach to obtain accuracy improvements in the low-memory regime for a range of models beyond GBDTs, including linear models and deep networks, which depend on dense inputs. In contrast with the popular \emph{hashing trick}~\cite{ht1989,vw} that uses random hashing to reduce dimensions, this data-dependent approach exploits dataset structure to unlock substantial improvements in accuracy on real-world sparse datasets.

Our method, \emph{Chromatic Learning} (CL), performs graph coloring to obtain a low-dimensional, dense, categorical feature representation, then applies additional dense reduction methods. First, CL creates a feature co-occurrence graph, where any features that co-occur have an edge connecting them. Since co-occurrence is rare, the resulting graph is sparse and has a low chromatic number. CL assigns each input feature a categorical variable based on its color in the graph coloring, thus representing sparse inputs using fewer categorical dimensions. This reduces memory usage because identically colored categories share the same embedding dimensions. Subsequently, CL applies one of several techniques for compressing dense categorical features, such as frequency-based truncation and submodular feature compression~\cite{naivesub}.  %\sbnote{You never say why high cardinality is bad.} 
By enabling the application of these categorical feature compression techniques in sparse settings, CL exhibits substantial reductions in column count with equivalent accuracy compared to baselines with larger column budgets.  Furthermore, representations from CL generalize because unseen examples exhibit the same sparse structure and rarely contain features identified with the same color.

We demonstrate the efficacy of CL by learning linear models directly on the compressed space of colors and achieve the same level of test error using $10\times$ fewer features than frequency-based truncation and hashing trick approaches across four benchmark datasets. Additionally, we show that in low-dimensional settings CL improves classification performance for a variety of neural network architectures including Factorization Machines \citep{fm}, Wide and Deep learning \citep{widedeep},  Neural Factorization Machines \citep{nfm}, and DeepFM \cite{deepfm}
  compared to baseline dimensionality reduction methods including the hashing trick.

% is the below really necessary?
%The rest of this paper proceeds as follows. In Sec.~\ref{sec:rw}, we review related work. In Sec.~\ref{sec:cl}, we describe chromatic learning. Then, Sec.~\ref{sec:eval} demonstrates evaluation, Sec.~\ref{sec:disc} discusses broader implications of chromatic representation, and Sec.~\ref{sec:conclude} touches on future work.

\section{Related Work}\label{sec:rw}

Our contributions relate to several lines of research literature: hashing-based kernels, gradient-boosted trees, and submodular optimization. In this section, we review each.

\subsection{Hashing Trick}

The hashing trick (HT) initially appeared in \cite{ht1989} as a method for dimensionality reduction. HT is a linear transformation $\phi:\R^D\rightarrow\R^d$ which reduces sparse vectors in a high $D$-dimensional space to a small $d$-dimensional one using two hashes $\eta:[D]\rightarrow [d], \xi:[D]\rightarrow \{\pm 1\}$ \citep{ht2009}, $\phi_i(\vx)= \sum_{j:\eta(j)=i}\xi(j)x_j$,
which approximately preserves linear inner products (i.e., $\E\ha{\inner{\phi(\vx), \phi(\vy)}}=\inner{\vx, \vy}$ with low variance) and thus reconstructs a linear kernel on the original space $\R^D$.

Recently, HT structural requirements were characterized by the upper bound on the ratio $\nicefrac{\norm{\vx}_\infty}{\norm{\vx}_2}\le \nu$ for all inputs $\vx$ \citep{ht2018}: with probability $1-\delta$, if $d=\Omega\pa{\epsilon^{-2}\log\frac{1}{\delta}}$ and $\nu=\tilde O\pa{\sqrt{\epsilon}}$, then the relative error between $\norm{\phi(\vx)}_2$ and $\norm{\vx}_2$ is at most $\epsilon$ (a condition related to preserving inner products through the parallelogram law). A $\nu=\tilde O\pa{\sqrt{\epsilon}}$ condition points to the generality of HT. For $k$-sparse binary vectors in $\R^n$, $\nu\le k^{-1/2}$; however, this relies on at least $k$ non-zero values for every sparse input. For a bag of words representation, a $\nicefrac{1}{k}$ relative norm error would require sentences of at least $k$ words. An intuitive question that we seek to answer in this work is whether there are dimensionality reduction mechanisms that can take advantage of stronger structure than $\nu$, and even benefit from having few non-zero entries.

\subsection{Exclusive Feature Bundling}\label{sec:efb}

Gradient-boosted decision trees (GBDTs) are a supervised learning algorithm for learning over a base decision tree (DT) \citep{gbm}. For a weak hypothesis class of DTs $\mcF$, GBDTs provide a mechanism to learn in the larger class of the additive closure $\mcG$ of $\mcF$, where, given a running weighted sum $F_t=\sum_{i=1}^tw_if_i$ for $f_i\in\mcF$, a new $f_{t+1}$ is fit to the gradient of the loss at at each data point  $\partial_{F_t}(\vx)\ell(F_t(\vx), y)$ and then added $F_{t+1}=F_t+w_{t+1}f_{t+1}$ with an appropriately-chosen weight.

While this approach allows search over the larger class $\mcG$, DT training at each iteration is expensive, unless alleviated through specialized techniques like exclusive feature bundling (EFB) in \cite{lgbm}. In that work, the authors notice that DT training can be accelerated by reducing dataset width. EFB builds a co-occurrence graph on the set of categorical features $V$ with an edge between two vertices if they ever co-occur (i.e., attain nonzero values in at least one training example). Coloring this graph $G=(V, E)$ provides a map from vertices to colors, where vertices sharing a color must be mutually exclusive among all observed training examples. \cite{lgbm} uses an incremental but serial binary adjacency matrix construction for an $O(V^2)$ greedy coloring run time.

Unifying each of these color sets into a single categorical variable reduces DT training cost, improving GBDT training speed overall. Similar dimensionality reduction techniques have been observed in other domains, such as register allocation \citep{register}.
However, many estimators require numerical inputs, such as linear models, Gaussian Process classifiers, Factorization Machines, and neural networks. Broader application of EFB is thus limited because one-hot encoding, typically used for pre-processing categorical features, inverts bundling: the one-hot encoding of a bundled feature is equivalent to the concatenation of one-hot encodings for its constituents. 
In this work, we present a method that adapts the general idea of compression via coloring co-occurrence but is suitable for use in models that require numerical inputs.

\subsection{Submodular Optimization for Categorical Feature Compression}\label{sec:rwsubmod}

\begin{wrapfigure}[25]{r}{5cm}
\centering
\vspace{-10pt}
  \caption{Training and test curves with and without sample splitting using categorical feature compression. The Criteo dataset was preprocessed by taking logarithms of its count features, which is standard for this dataset.}\label{fig:criteo-overfit}
\includegraphics[width=4cm]{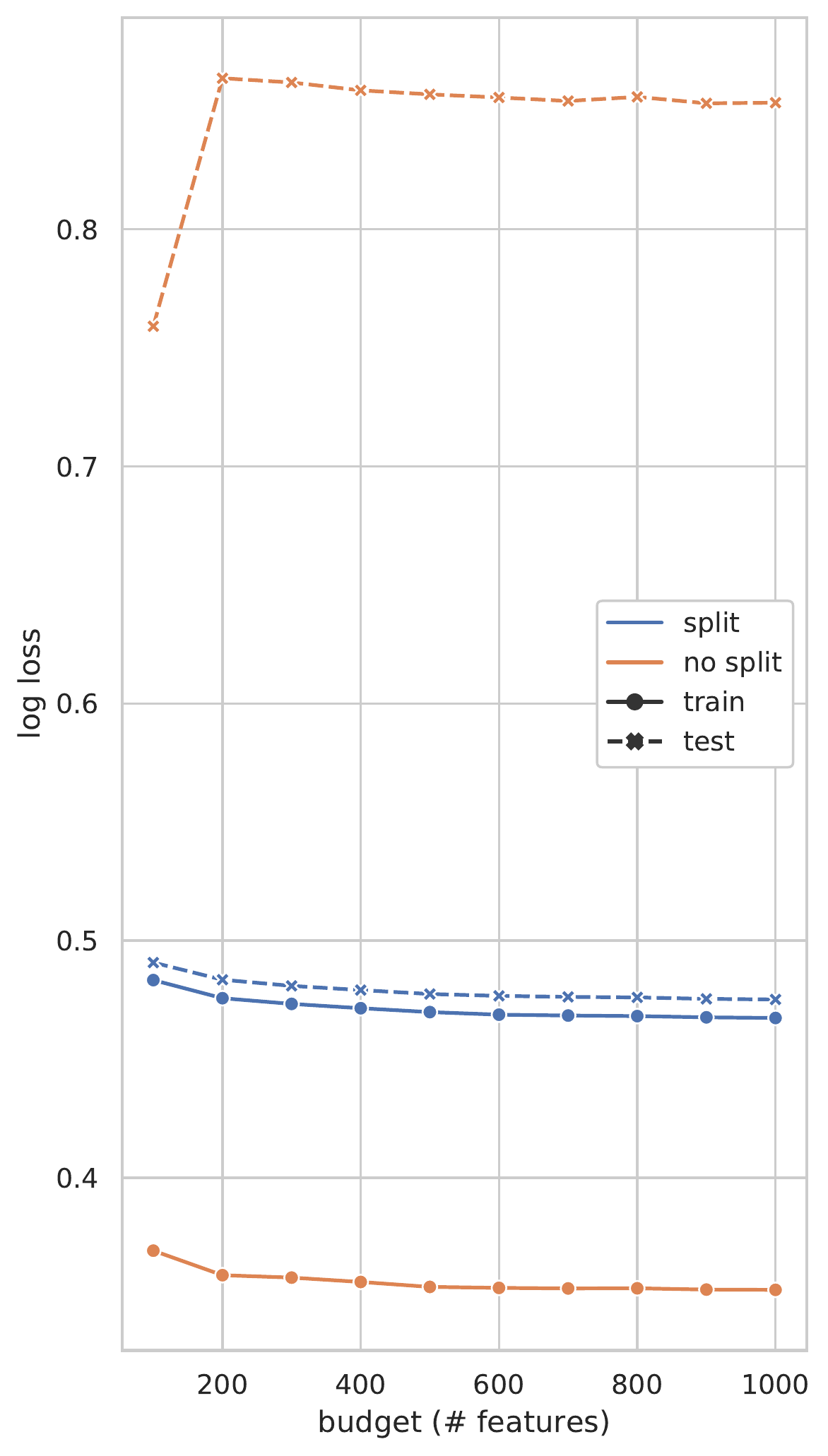}
\end{wrapfigure} 

Recent work \citep{naivesub} shows that the problem of quantizing a fine categorical variable with $D$ values to one with $d$ values while preserving as much mutual information with the binary label as possible is monotone submodular.

A set-valued function $f:2^V\rightarrow\R$ is monotone if $f(A)\le f(B)$ for $A\subset B\subset V$ and submodular if the gain $\Delta(x|Y)=f\pa{\ca{x}\cup Y}-f(Y)$ satisfies $\Delta(x|A)\ge \Delta(x|B)$ for $A\subset B\subset V$ and $x\in V\setminus B$. Such functions admit deterministic $\pa{1-e^{-1}}$-approximate maximization procedures in psuedo-polynomial time \citep{lazygreedy} and polynomial randomized algorithms \cite{stochasticgreedy} for finding $\argmax_{T}f(T)$.

In a classification setting with a single $[D]$-valued categorical feature $X$ and a binary label $Y$, \cite{quantization} show that the mapping $Z:[D]\rightarrow[d]$ which maximizes $I(Z(X);Y)$ is defined by $d+1$ splitters $S=\ca{s_0\cdots s_d}$ with $Z_S(x)=i$ when $s_{i-1}<x\le s_i$. \cite{naivesub} prove that selecting the set $S\subset[D]$ is monotone submodular maximization problem $I(Z_S(X); Y)$.

We found direct application of the method unsuccessful.
Using label information to featurize results in target leakage \citep{catboost}. \cite{naivesub} groups features together with similar conditional positive label probability. This artificially reduces label variance within each quanitzed feature cluster and results in overfitting. We verify this on the original dataset used in \cite{naivesub}, the Criteo Ad-Click Prediction dataset. To reduce variance from the fitting procedure for evaluation, we train a logistic regression, but note that for such a cheap learner, dimensionality reduction is not necessary for the end classification goal. Figure.~\ref{fig:criteo-overfit} shows the training and testing log loss of categorical feature compression applied to estimates of conditional probability made from the training set itself compared to a simple fix, data splitting. With an even split, we estimate conditional probabilities with half of the data, and train on the other half.\footnote{On a practical note, this can be done deterministically without requiring another copy of the data by hashing each example and using the first bit of the resulting hash to split training data.} We find that splitting is crucial for low test loss and, as expected, training loss is conspicuously low when double-dipping.

A few challenges with the application of \cite{naivesub} remain:

\begin{enumerate}
    \item Many datasets (Sec.~\ref{sec:eval}) have millions of binary features, rather than few categorical ones with many values.
    \item \cite{naivesub} extend their method to multiple categorical variables with a heuristic, but it does not perform well when significantly reducing dimension (Sec.~\ref{sec:global-submod}).
\end{enumerate}

\section{Chromatic Learning}\label{sec:cl}

We present CL for reducing the dimensionality of sparse datasets for use in models that expect dense, numerical inputs. For simplicity of presentation, we assume dense, continuous features are set aside and focus on reducing the dimensionality of large, sparse vectors with binary features. % practical notes describe how to do this plus workaround.

CL requires performing a parallelizable reduction over the data to collect the set of features $V$ and feature co-occurrences $E$ that comprise the graph $G$ from Sec.~\ref{sec:efb}. After this collection, the graph $G$ is colored. On $P$ processors, this requires $O(V+ E+ P^2)$ memory and $O\pa{\nicefrac{ \Delta V + E + n}{P}}$ serial run time for $n$ examples, where $\Delta$ is the maximum degree in $G$.

At this point, each feature in $V$ is identified with a color, through a mapping $c$, representing a dense categorical dataset over $\card{cV}$ variables and $V$ distinct categorical values across all variables. This enables the application of several categorical encodings. We describe an extension to submodular feature compression and refer to \cite{targetstats} for a description of target encoding. %\pbnote{Would be nice to have a small diagram for the workflow / steps} <- no room, what to cut if so?

\subsection{Chromatic Representation}\label{sec:coloring}

Suppose our training set consists of iid examples, $\vx_i\in 2^D$. Associate with each $\vx_i$ its set of active indices $T_i\subset [D]$, and consider the co-occurrence graph $G$ defined by vertices $V=\bigcup_i T_i$ and edges $E=\bigcup_iK(T_i)$, with $K(\cdot)$ generating the edges from the complete graph on its argument, a set of vertices. Given a proper coloring of $G$, $c:V\rightarrow\N$, we show that the representation of the data defined by categorical vectors $\vv$ with $\card{cV}$ categorical variables (where $v_i$ has cardinality $\card{c^{-1}\{i\}}$) permits learning with low generalization error.

In particular, any Lipschitz-smooth decision function on the original space $2^D$ can be approximated by one that operates on the chromatic representation $\vv$. By construction this holds for all training examples, but for a test example $T\subset[D]$, two features may have identical colors. In this case, the example must be approximated by an input with one of the colliding features missing.\footnote{Given two features mapping to the same color outside the training set, the more popular feature is dropped for our evaluation.}

For a given $T\subset[D]$, let $\mathrm{CC}(G, T)$ be the count of color collisions, i.e., $\card{T}-\card{cT}$. By smoothness, bounding $\E \mathrm{CC}(G, T)$ implies low average discrepancy between the decision function on the two representations (Appendix~\ref{sec:reduced}). We find that a greedy coloring has an average color collision count less than one on all our sparse benchmark datasets (Tab.~\ref{tab:graph-data}).

\begin{table}[h]

  \caption{Graphical properties derived from the training split of benchmark datasets (Table.~\ref{tab:datasets}). $V$ is the set of vertices for the graph generated from these sets containing the unique sparse binary features found in the data. CC refers to the average collision count per example in the test set, using a greedy coloring of the graph generated by the training set.}
  \label{tab:graph-data}
  \begin{center}

  \begin{tabular}{lrrrrrrr}
    \toprule
dataset & avg edges per ex. & $\card{V}$ & avg. degree & colors & CC & avg. nnz  \\
\midrule
\texttt{url} & 481 & 2.71M & 74 & 395 & 0.40 & 29 \\
\texttt{kdda} & 682 & 19.31M & 129 & 103 & 0.30 & 36 \\
\texttt{kddb} & 439 & 28.88M & 130 & 79 & 0.21 & 29 \\
\texttt{kdd12} & 27 & 50.33M & 32 & 22 & 0.09 & 7 \\
    \bottomrule
  \end{tabular}
\end{center}
\end{table}

\subsection{Scalable Coloring}\label{sec:graph-constr}

The previous section motivates constructing the graph $G$ explicitly to obtain a proper coloring. However, construction of $G$ can potentially be expensive. We describe how to efficiently construct $G$.

First, we require the union $E$ of the edge sets $K(T_i)$. We process the dataset in $P$ parallel chunks on mapper threads. We maintain $W=\alpha P$ global sets of edges, where $\alpha$ is the average ratio of time it takes to generate edges from each point to the time it takes to add such edges to a hashset. 
We split the hash space of edges into $W$ parts. Each mapper allocates $W$ local buffers and, upon processing $T_i$ adds edges from $K(T_i)$ to the local buffer corresponding to each edge's shard. Once a buffer is filled, the mapper locks the corresponding global set of edges and adds the global set with its local edges.

The expected contention time on each global set's mutex is constant by the choice of $\alpha$.\footnote{In practice, we can choose $\alpha=1$ as the hashing the edge, which is done locally on each mapper, is the most expensive operation.} As a result, every edge only requires a constant amount of time to process. Each mapper only requires $O(P)$ local working memory, independent of the data, in contrast to tree-merge paradigms such as map-reduce. % cite?
This hash-join approach can be applied to vertices as well and can be extended by storing aggregate statistics for each vertex, as is necessary for Sec.~\ref{sec:global-submod}.

Once $E$ is collected and converted in parallel to an adjacency list representation, $G$ is ready for approximate graph coloring, which requires $O\pa{\nicefrac{V\Delta}{P}})$ time, where $\Delta$ is the maximum degree \cite{parallelcoloring}. Converting to an adjacency list in parallel can be done by mapping over the edge set: first computing the degree of each vertex across threads with atomic increments, and next using a cumulative sum across the degree array to define atomic integral offsets in an adjacency array, which can then be filled in another parallel sweep across $E$.

\subsection{Categorical Encodings}\label{sec:global-submod}

With each feature assigned a color, we view the dataset as a categorical input, with colors as categorical variables and the original input features as categorical values. This permits the use of several categorical encodings.

With target encoding (TE), the average label value replaces categories, yielding just $m$ numeric features, one for each color. With CL combined with frequency truncation (CL+FT) and a budget $b$, we embed the $b$ most frequent categories in $d$ dimensions, yielding an embedding layer that creates $(dm)$-sized embeddings with $bd$ parameters. For neural networks with a first hidden layer of size $h$, using frequency truncation (FT) alone requires the first layer to use $hbd$ parameters, which is prohibitive for all but small $b$.

In addition, we present an extension to submodular-based feature compression (SM). We depart from \cite{naivesub}, which recommends compressing to a fixed budget of features $b$ by running categorical feature compression on each categorical variable $X_i$ to maximize $I(Z_i(X_i); Y)$, sorting results by marginal gain across all features, and using the top $b$ selections for the final encoding. This results in suboptimal compression relative to our alternative, as the final sorting stage is based on marginal gains made with respect to each individual solution, so the top-$b$ features do not necessarily maintain any optimality properties.

Instead, we maximize the global submodular problem $\sum_iI(Z_i(X_i); Y)$ across all colors $i$ simultaneously---adding submodular functions over disjoint inputs retains submodularity. This outperforms the sorting heuristic on the original Criteo task proposed by \cite{naivesub} (Fig.~\ref{fig:criteo-improve}). Excerpting logarithmic factors, solving one submodular problem of size $V$ is faster than multiple of mixed sizes summing to $V$, requiring $ O(\log^2 V + V/P)$ time \citep{balkanski2019exponential}.

With this final encoding, an example $\vx\in\R^D$ is transformed into one in $\R^b$ by taking every nonzero feature $j$ in $\vx$ and setting the corresponding feature $Z_{c(j)}(j)$, which is then one-hot encoded.

\begin{figure}
  \centering
  \includegraphics[width=8cm]{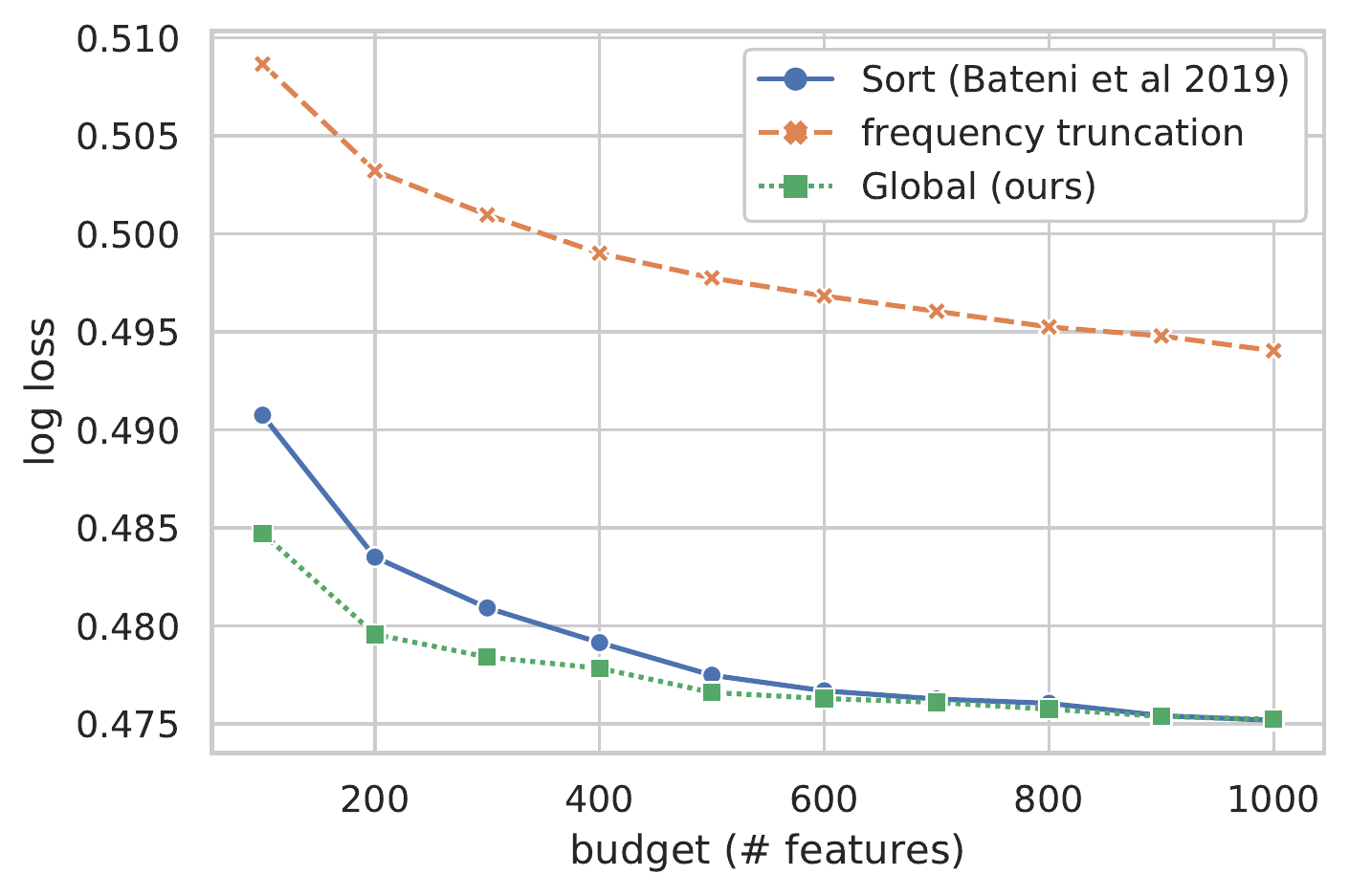}
  \caption{Performance on Criteo ad-click prediction, a dense dataset with 13 numeric columns and 26 high-cardinality categorical columns, evaluated on a linear model at different budgets for encoded vocabulary, comparing our sum-of-mutual information objective to the sort-based one of \cite{naivesub}.}\label{fig:criteo-improve} 
\end{figure}

\section{Evaluation}\label{sec:eval}

In this section, we evaluate CL relative to alternative approaches to learning over sparse data by assessing the accuracy recovered by the same learning procedure applied across several reduction procedures at different budgets for the output dimension.%\pbnote{want to highlight wide and deep, etc.?}

While our approach allows end-to-end parallelism, we found that on real datasets coloring, initialization, and vertex (feature) processing were not the bottlenecks and did not require parallelism.
We considered a feature dense if it appeared in over $10\%$ of training rows. Sparse features with multiple associated numerical values were treated identically.\footnote{Initial results showed this did not affect performance, so we elided these values. An alternative would track distinct feature values with a separate dictionary, binning for sparse and continuous data.}

We evaluate standard sparse benchmark datasets available in Table~\ref{tab:datasets}. All four evaluation sets contain millions of sparse binary features, with relatively few active per example. Only the \texttt{url} dataset \citep{url} had numeric features. No additional preprocessing was performed on the retrieved data.\footnote{Datasets may be retrieved from \texttt{https://www.csie.ntu.edu.tw/\textasciitilde cjlin/libsvmtools/datasets/}.}

\begin{table}
  \caption{Dataset dimensions. All datasets were generated from sequential observations, so they were split chronologically. Average nnz denotes the average number of nonzero entries per row.}
  \label{tab:datasets}
  \centering
  \begin{tabular}{lrrrrrrr}
    \toprule
dataset & train ex. & test ex. & dense feat. & sparse feat. & avg. nnz & max nnz \\
\midrule
\texttt{url} & 1.68M & 0.72M & 134 & 2.71M & 29 & 327 \\
\texttt{kdda} & 8.41M & 0.51M & 0 & 19.31M & 36 & 85 \\
\texttt{kddb} & 19.26M & 0.75M & 2 & 28.88M & 29 & 75 \\
\texttt{kdd12} & 119.71M & 29.93M & 7 & 50.33M & 7 & 10 \\
    \bottomrule
  \end{tabular}
\end{table}

\subsection{Dimensionality Reduction}\label{sec:dimred}

A comparison of chromatic learning with submodular feature compression (CL+SM) to frequency-based truncation (FT) and hashing trick (HT) shows favorable performance in for linear estimators across a variety of budgets. We performed a single pass over the training data via Vowpal Wabbit with default parameters using, which performs online adaptive gradient descent \cite{vw}. Our results illustrate that, to achieve equivalent log loss on the test set, FT requires a magnitude more features and HT requires two orders of magnitude (Fig.~\ref{fig:linear}).% \pbnote{can you get sharper than "two orders of magnitude," like "up to $248\times$"} <- no, that's assuming more precision than we really have

\begin{figure}
  \includegraphics[width=\columnwidth]{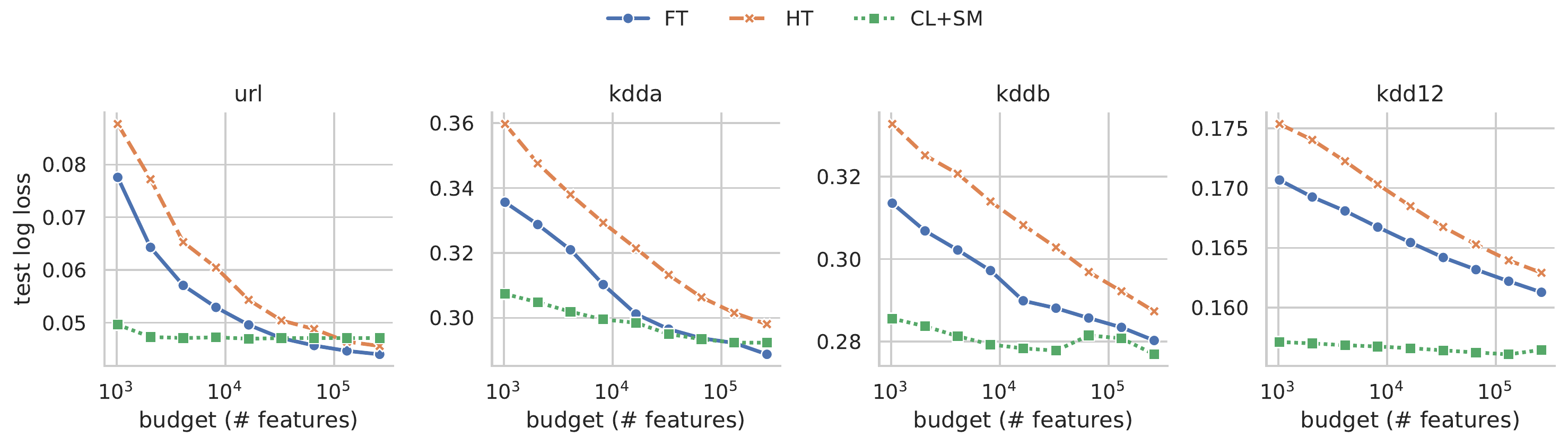}
  \caption{Chromatic learning outperforms FT and HT for all input feature budgets under $2^{18}$ for datasets with significantly more training examples (\texttt{kddb} and \texttt{kdd12}). For datasets with single-digit millions of training points (\texttt{url} and \texttt{kdda}), CL performs better than the baselines on small budgets of input features but slightly worse for large ones. Since CL is using half the training points for supervised learning than HT or FT are, it hits a generalization error floor earlier, but this floor can be lowered by tuning the split ratio mentioned in Sec.~\ref{sec:rwsubmod}.
}\label{fig:linear}
\end{figure}

For very large budgets that approach the original dimensionality on the two smaller datasets, FT outperforms CL+SM. In this regime, where nearly the full input dimensionality is preserved, we suspect that the quantization may interfere with model confidence.

\subsection{Non-Linear Sparse Learning}\label{sec:non-linear}

With a small budget, CL enables learning more sophisticated classifiers that are otherwise not trainable in limited-memory settings. We assess the performance of several deep learning models given different dimensionality reduction approaches in the highly compressed setting of 1024 features. 

We used default parameters specified by the deep learning library \cite{pytorchfm}, but reduced batch size to 256 (from 2048) because of an out-of-memory error on the Nvidia V100 GPUs we were using. The library was originally configured for dense datasets, such as Criteo.
%\pbnote{Why did the authors default to 2048? Do our GPUs suck? Can you say why this is okay? Basically convince me that this is okay; I can't tell what's up from \url{https://github.com/rixwew/pytorch-fm/blob/master/torchfm/dataset/movielens.py}}.
% VF: They weren't working with such wide datasets as we are
% e.g., criteo has only 7 categorical variables.
We reduced the epoch count to 5 (from 15) to account for the increased number of gradient steps and did not change any other settings.

We find that CL compares favorably in terms of test accuracy with several categorical encoding strategies compared to the FT baseline (Tab.~\ref{tab:nonlinear}). Across all 5 evaluated architectures and all 4 datasets, CL+SM outperforms both HT and FT on every dataset, except for DeepFM with \texttt{url}, where it achieves a test log loss of 0.058, compared to FT's 0.053. Furthermore, CL+FT presents a simpler alternative to CL+SM which enables using the full dataset and many more features directly, on 2 of the 4 datasets, this improves upon the CL+SM representation with 1024 one-hot features.

%\pbnote{Expand here a bit, give some actual numbers}
% will do if room left; o/w will need to favor show not tell

%Tab.~\ref{tab:nonlinear} reveals that CL+SM can successfully be combined with nonlinear approaches to learning. Because different learners use different numbers of parameters, it is possible for Neural Factorization Machines \citep{nfm} to outperform other models, in particular linear approaches such as logistic regression and other, smaller neural networks such as Deep FM \cite{deepfm}.
%\vfnote{[RUNNING] Finally, we evaluate CL without any categorical encoding at all. We perform frequency-based truncation to 258K features on all datasets (approximately the largest feature size that would fit on our GPUs), and compare a Wide and Deep (WD) \cite{widedeep} model with shared categorical embeddings for every generated color against a flat embedding for every feature (without CL). Because the default WD first hidden layer size for WD is 16, this results in approximately a $16\times$ reduction in the first layer's parameters for the same embedding size. Using CL allows us to use approximately $16\times$ fewer parameters for its categorical fields, resulting in much more stable training. Tab.~whatever shows the results of using CL on otherwise identically featurized datasets.    }

\begin{table}
  \caption{Final classifier test log loss across different models. Each model compares frequency-based truncation (FT) and hashing trick (HT) baselines against chromatic learning (CL) approaches of submodular feature compression (SM) and target encoding (TE), with input dimension budget set to 1024. We notice that a CL with frequency-based truncation with large budgets $2^{18},2^{20}$ (CL+FT@18, CL+FT@20) performs better than the more sophisticated CL+SM approach on certain datasets. For such large budgets, the Wide and Deep model with just FT runs out of memory or does not finish training within a day on a V100 Nvidia Tesla GPU.}
  \label{tab:nonlinear}
  \centering
\begin{tabular}{llrrrr}
\toprule
learner & encoder & \texttt{url} & \texttt{kdda} & \texttt{kddb} & \texttt{kdd12} \\
\midrule
Wide and Deep \citep{widedeep}  & FT &  0.046 & 0.326 & 0.314 & 0.172 \\
 & HT &  0.058 & 0.337 & 0.328 & 0.173 \\
 & CL+SM &  0.037 & 0.296 & \textbf{0.276} & \textbf{0.159} \\
 & CL+TE &  0.325 & 0.588 & 0.517 & 0.280 \\
 & CL+FT@18 &  0.032 & \textbf{0.292} & 0.296 & 0.169 \\
 & CL+FT@20 &  \textbf{0.030} & \textbf{0.292} & 0.291 & 0.169 \\
\midrule
Logistic Regression & FT &  0.077 & 0.336 & 0.315 & 0.172 \\
 & HT &  0.087 & 0.359 & 0.330 & 0.175 \\
 & CL+SM &  \textbf{0.048} & \textbf{0.308} & \textbf{0.283} & \textbf{0.159} \\
 & CL+TE &  0.198 & 0.571 & 0.423 & 0.204 \\
\midrule
Factorization Machines \citep{fm} & FT &  0.076 & 0.321 & 0.311 & 0.172 \\
 & HT &  0.086 & 0.339 & 0.323 & 0.170 \\
 & CL+SM &  \textbf{0.063} & \textbf{0.303} & \textbf{0.282} & \textbf{0.159} \\
 & CL+TE &  0.253 & 0.691 & 0.665 & 0.206 \\
\midrule
Neural Factorization Machines \citep{nfm} & FT &  0.045 & 0.320 & 0.309 & 0.172 \\
 & HT &  0.056 & 0.317 & 0.310 & 0.170 \\
 & CL+SM &  \textbf{0.040} & \textbf{0.291} & \textbf{0.271} & \textbf{0.158} \\
 & CL+TE &  0.285 & 0.804 & 0.839 & 0.294 \\
\midrule
DeepFM \cite{deepfm} & FT &  \textbf{0.053} & 0.321 & 0.313 & 0.172 \\
 & HT &  0.075 & 0.333 & 0.327 & 0.170 \\
 & CL+SM &  0.058 & \textbf{0.302} & \textbf{0.287} & \textbf{0.159} \\
 & CL+TE &  0.363 & 0.792 & 0.681 & 0.282 \\
\bottomrule
\end{tabular}
\end{table}

\section{Discussion}\label{sec:disc}

In this work, we explore using graph coloring to generate virtual categorical variables that create a dense view of a sparse dataset. The strong empirical performance (Tab.~\ref{tab:nonlinear}) requires explanation, since the variables were created artificially.

The fact that the colors generated by a greedy coloring are effective at representing the dataset on unseen examples is surprising, because new co-occurrences (edges) appear in the test set frequently (Tab.~\ref{tab:graph-data}). When new edges appear between features of different colors, the chromatic representation is lossless. However, in an adversarial setting, this data-dependent property may not hold. An alternative approach (which our early experiments deemed unnecessary) would choose a random, uniform coloring over $G$ instead. If $G$ has a low maximum degree $\Delta$, then such a coloring can be sampled by simulating Glauber dynamics with $m>2\Delta$ colors \cite{jerrum}. Furthermore, such sampling is internally parallelizable because each update is local to a vertex's neighborhood. Given any new co-occurrence in the test set between two features $x,y$, a uniform coloring must have assigned $x,y$ one of $m-\deg_Gx-\deg_Gy\ge m-2\Delta$ colors, limiting a color collision's probability by choice of $m$ regardless of the properties of the data's distribution. However, greedy coloring required fewer colors than a random one and did not encounter many color collisions in practice. 

Second, how can we reconcile the relationship between the discrete graphical structure $G$ and a variable training set size $n$, which affects $G$? Given infinte datasets, would $G$ be complete? $G$ is a random graph obtained from sampling cliques from a true latent graph $L$ of co-occurrences. Since $G\subset L$, it suffices for the latent graph $L$ to be sparse. However, even if $L$ is complete, edges only contribute to collisions to the extent that they appear in test examples. Thus, for the co-occurence graph $G=(V, E)$, the expected count of unseen edges, $\E\card{K(T)\setminus E}$, provides a better proxy for the propriety of the chromatic representation than $L\setminus G$. Given a set of $n$ training examples, $\E\card{E\setminus K(T)}$ admits a Good-Turing-type \cite{goodturing} upper bound, $\frac{1}{n}\sum_i\card{K(T_i)\setminus E_i}$ where $E_i=\bigcup_{i'\neq i}K(T_i)$. This approach lets us analyze even sparser co-occurrence graphs, such as $G^{(2)}$, the co-occurrence graph with edges that appear at least twice in the training set. $G^{(2)}$ and its natural successors $G^{(3)},G^{(4)},\cdots$ may be efficiently constructed by using a rolling set of bloom filters to prune out edges that are only generated by a few training examples \cite{mcmahan2013ad}. Appendix~\ref{sec:reduced} elaborates on the trade-off between chromatic representation fidelity and colors used by $G^{(1)}=G,G^{(k)}$, and how Good-Turing-type estimators can be used to forecast new edge incidence counts between the two co-occurrence construction approaches.

Besides making learning tractable for estimators that require dense inputs, one interesting implication of CL+SM is that it yields $n\times d$ design matrices where, typically, $d\ll n$. For small $d$, this opens up sketching approaches to learning whose superlinearity in input dimension otherwise makes them inaccessible to wide, sparse datasets, such as coreset construction for nearest-neighbor queries \cite{inaba1994applications}. Further, in the case of linear models, an $n\times d$ design can be represented faithfully as a $(d+1)\times d$ one by Carath\'{e}odory's Theorem \citep{lms}, which for small $d$ can greatly simplify linear learning (e.g., tuning regularization parameters no longer requires multiple passes over the data).

Beyond the supervised setting, co-occurrence graphs may be appealing from an unsupervised learning perspective: a weighted co-occurrence graph may be used to accelerate graphical model structure learning \cite{chordalysis} by pruning the search space of log-linear models during forward selection.

%\vfnote{this makes learning parallel b/c everything is dense, which scales better than hogwild, PB says there's a relevant google paper. PB, what is it?}

\section{Conclusion}\label{sec:conclude}

We have introduced Chromatic Learning, a method that provides a viable representation of sparse data that enables otherwise-inaccessible learning methods to be applied---such as neural networks---in memory-constrained settings, as shown in Sec.~\ref{sec:non-linear}. This presents several avenues for future work. Optimizing the tradeoff in the data split for submodular feature compression between estimating conditional probabilities and training may result in lower test error. In addition, a balanced coloring scheme may further reduce color collisions, improving accuracy.
    %\item Submodular feature compression is limited to binary outputs. Extensions to non-binary outputs, or investment in other categorical feature encoding approaches, directly benefit the chromatic learning meta-algorithm for dimensionality reduction.
    %\item Optimizing the tradeoff in the data split for submodular feature compression, between estimating conditional probabilities and training, may result in lower test error.
    %\item A balanced coloring scheme may further reduce color collisions, improving accuracy.
%Finally, we believe the co-occurrence graph merits its own study. The implications of its shape for learning are not well-understood, but using the approaches presented in this work the graph is recoverable in explicit form for wide, sparse datasets.
Finally, the approaches presented in this work illustrate the co-occurrence graph is recoverable in explicit form for many high-dimensional, sparse datasets. This phenomenon may merit its own study.

\pagebreak
\section*{Broader Impact}

This work provides learning methods that scale effectively across many processors while limiting memory. Such methods encourage more organizations to adopt machine learning techniques because of the relative cost of horizontal versus vertical scaling and the overall cost of memory.

Open access to distributed and large-scale methods is important for leveling the playing field between organizations in general that wish to apply learning techniques on such large, realistic sparse data.

This may be positive or negative, depending one's alignment with the values and goals of each individual organization that may apply chromatic sparse learning. Behind the veil, we believe increased accessibility of large-scale learning through cheaper processing of the same data is net positive.

\begin{ack}
The authors would like to thank Jon Gjengset, Greg Valiant, Barak Oshri, Arnaud Autef, and Kai Sheng Tai for helpful comments.
\end{ack}

\bibliography{chromatic}

\begin{thebibliography}{33}
\providecommand{\natexlab}[1]{#1}
\providecommand{\url}[1]{\texttt{#1}}
\expandafter\ifx\csname urlstyle\endcsname\relax
  \providecommand{\doi}[1]{doi: #1}\else
  \providecommand{\doi}{doi: \begingroup \urlstyle{rm}\Url}\fi

\bibitem[Ke et~al.(2017)Ke, Meng, Finley, Wang, Chen, Ma, Ye, and Liu]{lgbm}
Guolin Ke, Qi~Meng, Thomas Finley, Taifeng Wang, Wei Chen, Weidong Ma, Qiwei
  Ye, and Tie-Yan Liu.
\newblock Lightgbm: A highly efficient gradient boosting decision tree.
\newblock In \emph{Advances in neural information processing systems}, pages
  3146--3154, 2017.

\bibitem[Bateni et~al.(2019)Bateni, Chen, Esfandiari, Fu, Mirrokni, and
  Rostamizadeh]{naivesub}
Mohammad~Hossein Bateni, Lin Chen, Hossein Esfandiari, Thomas Fu, Vahab~S
  Mirrokni, and Afshin Rostamizadeh.
\newblock Categorical feature compression via submodular optimization.
\newblock \emph{arXiv preprint arXiv:1904.13389}, 2019.

\bibitem[Haldar et~al.(2019)Haldar, Abdool, Ramanathan, Xu, Yang, Duan, Zhang,
  Barrow-Williams, Turnbull, Collins, et~al.]{haldar2019applying}
Malay Haldar, Mustafa Abdool, Prashant Ramanathan, Tao Xu, Shulin Yang,
  Huizhong Duan, Qing Zhang, Nick Barrow-Williams, Bradley~C Turnbull,
  Brendan~M Collins, et~al.
\newblock Applying deep learning to airbnb search.
\newblock In \emph{Proceedings of the 25th ACM SIGKDD International Conference
  on Knowledge Discovery \& Data Mining}, pages 1927--1935, 2019.

\bibitem[Guo and Berkhahn(2016)]{guo2016entity}
Cheng Guo and Felix Berkhahn.
\newblock Entity embeddings of categorical variables.
\newblock \emph{arXiv preprint arXiv:1604.06737}, 2016.

\bibitem[Covington et~al.(2016)Covington, Adams, and Sargin]{covington2016deep}
Paul Covington, Jay Adams, and Emre Sargin.
\newblock Deep neural networks for youtube recommendations.
\newblock In \emph{Proceedings of the 10th ACM conference on recommender
  systems}, pages 191--198, 2016.

\bibitem[Cheng et~al.(2016)Cheng, Koc, Harmsen, Shaked, Chandra, Aradhye,
  Anderson, Corrado, Chai, Ispir, et~al.]{widedeep}
Heng-Tze Cheng, Levent Koc, Jeremiah Harmsen, Tal Shaked, Tushar Chandra,
  Hrishi Aradhye, Glen Anderson, Greg Corrado, Wei Chai, Mustafa Ispir, et~al.
\newblock Wide \& deep learning for recommender systems.
\newblock In \emph{Proceedings of the 1st workshop on deep learning for
  recommender systems}, pages 7--10, 2016.

\bibitem[He and Chua(2017)]{nfm}
Xiangnan He and Tat-Seng Chua.
\newblock Neural factorization machines for sparse predictive analytics.
\newblock In \emph{Proceedings of the 40th International ACM SIGIR conference
  on Research and Development in Information Retrieval}, pages 355--364, 2017.

\bibitem[Moody(1989)]{ht1989}
John Moody.
\newblock Fast learning in multi-resolution hierarchies.
\newblock pages 29--39, 1989.

\bibitem[Agarwal et~al.(2014)Agarwal, Chapelle, Dud{\'\i}k, and Langford]{vw}
Alekh Agarwal, Olivier Chapelle, Miroslav Dud{\'\i}k, and John Langford.
\newblock A reliable effective terascale linear learning system.
\newblock \emph{The Journal of Machine Learning Research}, 15\penalty0
  (1):\penalty0 1111--1133, 2014.

\bibitem[Rendle(2010)]{fm}
Steffen Rendle.
\newblock Factorization machines.
\newblock In \emph{2010 IEEE International Conference on Data Mining}, pages
  995--1000. IEEE, 2010.

\bibitem[Guo et~al.(2017)Guo, Tang, Ye, Li, and He]{deepfm}
Huifeng Guo, Ruiming Tang, Yunming Ye, Zhenguo Li, and Xiuqiang He.
\newblock Deepfm: a factorization-machine based neural network for ctr
  prediction.
\newblock \emph{arXiv preprint arXiv:1703.04247}, 2017.

\bibitem[Weinberger et~al.(2009)Weinberger, Dasgupta, Langford, Smola, and
  Attenberg]{ht2009}
Kilian Weinberger, Anirban Dasgupta, John Langford, Alex Smola, and Josh
  Attenberg.
\newblock Feature hashing for large scale multitask learning.
\newblock pages 1113--1120, 2009.

\bibitem[Freksen et~al.(2018)Freksen, Kamma, and Larsen]{ht2018}
Casper~B Freksen, Lior Kamma, and Kasper~Green Larsen.
\newblock Fully understanding the hashing trick.
\newblock pages 5389--5399, 2018.

\bibitem[Friedman(2001)]{gbm}
Jerome~H Friedman.
\newblock Greedy function approximation: a gradient boosting machine.
\newblock \emph{Annals of statistics}, pages 1189--1232, 2001.

\bibitem[Chaitin(1982)]{register}
Gregory~J Chaitin.
\newblock Register allocation \& spilling via graph coloring.
\newblock \emph{ACM Sigplan Notices}, 17\penalty0 (6):\penalty0 98--101, 1982.

\bibitem[Minoux(1978)]{lazygreedy}
Michel Minoux.
\newblock Accelerated greedy algorithms for maximizing submodular set
  functions.
\newblock In \emph{Optimization techniques}, pages 234--243. Springer, 1978.

\bibitem[Mirzasoleiman et~al.(2015)Mirzasoleiman, Badanidiyuru, Karbasi,
  Vondr{\'a}k, and Krause]{stochasticgreedy}
Baharan Mirzasoleiman, Ashwinkumar Badanidiyuru, Amin Karbasi, Jan Vondr{\'a}k,
  and Andreas Krause.
\newblock Lazier than lazy greedy.
\newblock In \emph{Twenty-Ninth AAAI Conference on Artificial Intelligence},
  2015.

\bibitem[Kurkoski and Yagi(2014)]{quantization}
Brian~M Kurkoski and Hideki Yagi.
\newblock Quantization of binary-input discrete memoryless channels.
\newblock \emph{IEEE Transactions on Information Theory}, 60\penalty0
  (8):\penalty0 4544--4552, 2014.

\bibitem[Prokhorenkova et~al.(2018)Prokhorenkova, Gusev, Vorobev, Dorogush, and
  Gulin]{catboost}
Liudmila Prokhorenkova, Gleb Gusev, Aleksandr Vorobev, Anna~Veronika Dorogush,
  and Andrey Gulin.
\newblock Catboost: unbiased boosting with categorical features.
\newblock In \emph{Advances in neural information processing systems}, pages
  6638--6648, 2018.

\bibitem[Micci-Barreca(2001)]{targetstats}
Daniele Micci-Barreca.
\newblock A preprocessing scheme for high-cardinality categorical attributes in
  classification and prediction problems.
\newblock \emph{ACM SIGKDD Explorations Newsletter}, 3\penalty0 (1):\penalty0
  27--32, 2001.

\bibitem[Gebremedhin and Manne(2000)]{parallelcoloring}
Assefaw~Hadish Gebremedhin and Fredrik Manne.
\newblock Scalable parallel graph coloring algorithms.
\newblock \emph{Concurrency: Practice and Experience}, 12\penalty0
  (12):\penalty0 1131--1146, 2000.

\bibitem[Balkanski et~al.(2019)Balkanski, Rubinstein, and
  Singer]{balkanski2019exponential}
Eric Balkanski, Aviad Rubinstein, and Yaron Singer.
\newblock An exponential speedup in parallel running time for submodular
  maximization without loss in approximation.
\newblock In \emph{Proceedings of the Thirtieth Annual ACM-SIAM Symposium on
  Discrete Algorithms}, pages 283--302. SIAM, 2019.

\bibitem[Ma et~al.(2009)Ma, Saul, Savage, and Voelker]{url}
Justin Ma, Lawrence~K Saul, Stefan Savage, and Geoffrey~M Voelker.
\newblock Identifying suspicious urls: an application of large-scale online
  learning.
\newblock In \emph{Proceedings of the 26th annual international conference on
  machine learning}, pages 681--688, 2009.

\bibitem[rixwew(2020)]{pytorchfm}
rixwew.
\newblock Factorization machine models in pytorch.
\newblock \url{https://github.com/rixwew/pytorch-fm}, 2020.

\bibitem[Jerrum(1995)]{jerrum}
Mark Jerrum.
\newblock A very simple algorithm for estimating the number of k-colorings of a
  low-degree graph.
\newblock \emph{Random Structures \& Algorithms}, 7\penalty0 (2):\penalty0
  157--165, 1995.

\bibitem[Good(1953)]{goodturing}
Irving~J Good.
\newblock The population frequencies of species and the estimation of
  population parameters.
\newblock \emph{Biometrika}, 40\penalty0 (3-4):\penalty0 237--264, 1953.

\bibitem[McMahan et~al.(2013)McMahan, Holt, Sculley, Young, Ebner, Grady, Nie,
  Phillips, Davydov, Golovin, et~al.]{mcmahan2013ad}
H~Brendan McMahan, Gary Holt, David Sculley, Michael Young, Dietmar Ebner,
  Julian Grady, Lan Nie, Todd Phillips, Eugene Davydov, Daniel Golovin, et~al.
\newblock Ad click prediction: a view from the trenches.
\newblock In \emph{Proceedings of the 19th ACM SIGKDD international conference
  on Knowledge discovery and data mining}, pages 1222--1230, 2013.

\bibitem[Inaba et~al.(1994)Inaba, Katoh, and Imai]{inaba1994applications}
Mary Inaba, Naoki Katoh, and Hiroshi Imai.
\newblock Applications of weighted voronoi diagrams and randomization to
  variance-based k-clustering.
\newblock In \emph{Proceedings of the tenth annual symposium on Computational
  geometry}, pages 332--339, 1994.

\bibitem[Maalouf et~al.(2019)Maalouf, Jubran, and Feldman]{lms}
Alaa Maalouf, Ibrahim Jubran, and Dan Feldman.
\newblock Fast and accurate least-mean-squares solvers.
\newblock In \emph{Advances in Neural Information Processing Systems}, pages
  8305--8316, 2019.

\bibitem[Petitjean et~al.(2013)Petitjean, Webb, and Nicholson]{chordalysis}
Francois Petitjean, Geoffrey~I. Webb, and Ann~E. Nicholson.
\newblock Scaling log-linear analysis to high-dimensional data.
\newblock In \emph{IEEE International Conference on Data Mining}, pages
  597--606, 2013.

\bibitem[McAllester and Schapire()]{mcallester2000convergence}
David~A McAllester and Robert~E Schapire.
\newblock On the convergence rate of good-turing estimators.

\bibitem[Matula and Beck(1983)]{matula1983smallest}
David~W Matula and Leland~L Beck.
\newblock Smallest-last ordering and clustering and graph coloring algorithms.
\newblock \emph{Journal of the ACM (JACM)}, 30\penalty0 (3):\penalty0 417--427,
  1983.

\bibitem[Dyer et~al.(2006)Dyer, Flaxman, Frieze, and Vigoda]{dyer2006randomly}
Martin Dyer, Abraham~D Flaxman, Alan~M Frieze, and Eric Vigoda.
\newblock Randomly coloring sparse random graphs with fewer colors than the
  maximum degree.
\newblock \emph{Random Structures \& Algorithms}, 29\penalty0 (4):\penalty0
  450--465, 2006.

\end{thebibliography}

\pagebreak
\appendix

\section{Reduced Co-occurrence Graphs}\label{sec:reduced}

In this section, we elaborate on guarantees for using reduced co-occurrence graphs. Recall that we observe $n$ iid samples of vectors $\vx_i\in 2^D$, equivalently represented by sets $T_i\subset[D]$ of active indices, with $K(T_i)$, the edge set of the complete graph on vertices $T_i$. We assume a sparse setting, i.e., $\card{T_i}\le \eta$.

Consider the set of vertices $V=[D]$ and the different co-occurrence graphs constructed over $V$ by using edges that appear at least $k$ times, $G^{(k)}$. These graphs will satisfy the nesting property $G^{(k+1)}\subset G^{(k)}$. Denote the edge count $\#(e)=\card{\set{i\in[n]}{e\in K(T_i)}}$ as well as the leave-one-out edge count $\#_i(e)=\card{\set{i'\in[n]\setminus\{i\}}{e\in K(T_{i'})}}$. This lets us build reduced co-occurrence graphs $G^{(k)}=(V, E^{(k)})$ containing edges $E^{(k)}=\set{e\in \bigcup_i K(T_i)}{\#(e)\ge k}$ and their leave-one-out analogues $G^{(k)}_i=(V, E^{(k)}_i)$ and $E^{(k)}_i=\set{e\in \bigcup_{i'\neq i} K(T_{i'})}{\#_{i}(e)\ge k}$.

Then we may construct Good-Turing-type estimators \cite{mcallester2000convergence} for the expected count of new edges. With $T$ an independent sample from the same distribution as each $T_i$, our estimator $N^{(k)}=\sum_i\card{K(T_i)\setminus E_i^{(k)}}$ upper bounds the average new edge count in expectation $\E N^{(k)}\ge n\E\card{K(T)\setminus E^{(k)}}$ because $E_i^{(k)}\subset E^{(k)}$. When we color a graph $G^{(k)}$ with $m$ colors by a function $c:V\rightarrow[m]$, we induce a chromatic representation in the space $C=c^{-1}\{1\}\times c^{-1}\{2\}\times \cdots\times c^{-1}\{m\}$ (to represent absences, suppose $\bot\in V$ without loss of generality, where $c(\bot)=0$). For simplicity, we consider a fixed lossy transformation between $2^D$ and $C$, given by $\vx\mapsto \vy$, where $y_j=0\lor \inf \set{i}{x_i=1,c(i)=j}$. For certain machine learning algorithms, such as neural networks, learning on $C$ is much cheaper than on $2^D$ directly. Furthermore, categorical dimensionality reduction techniques may be applied to $C$, whereas they would be unavailable in $2^D$.

To analyze the fidelity of the space $C$ in representing vectors from the sampling distribution, we'll consider how lossy their one-hot encoding is in the original $2^D$ space. This transformation yields a color collision resolution function $\chi:2^D\rightarrow 2^D$ that describes achievable values by the chromatic representation. If $\vx$ exhibits no color collisions under $c$, then the collision resolution function $\chi(\vx)=\vx$. If $T$ is the set of active indices in $\vx$ then $\chi$ relates to the color collision count through $\norm{\chi(\vx)}_1=\card{c T}$. Given this setup, we can construct a range of chromatic representations, parameterized by the threshold $k$, with the following representation fidelity property for all $k$.

\begin{theorem}\label{thm:guarantee}
Consider a measure $\mu$ over $2^D$, a sample of $n$ independent observations $\mathbf{x}_i\sim \mu$, whose corresponding sets of active indices $T_i\subset[D]$ satisfy $\card{T_i}\le \eta$, and fix any $k,m\in\N$. As above, construct the random graph $G^{(k)}$ over vertices $V=[D]$ with the union of edges from complete graphs $K(T_i)$ which appear at least $k$ times among all $i\in[n]$. Independently sample a uniform proper $m$-coloring $c$ over $G^{(k)}$. This random coloring induces the collision resolution function $\chi:2^D\rightarrow 2^D$ that, for any input $\vx$, returns the same vector but with all higher indices $x_i$ set to zero if $c(i)=c(j)$ for some smaller index $j<i$ with $x_j=1$. Define $\Delta^{(k)}$ to be the maximum degree of $G^{(k)}$. Then, if $m>2\Delta^{(k)}$, with probability $1-\delta$ over the choice of the sample, for any $L$-Hamming-Lipschitz  $f:2^D\rightarrow \R$,
$$
\E \norm{f-f\circ \chi}_{L^1(\mu)}\le \frac{L}{m-2\Delta^{(k)}}\pa{\frac{N^{(k)}}{n}+\frac{k \eta^2 \log \delta^{-1}}{\sqrt{n}}}\,\,,
$$
where $N^{(k)}$ is defined recursively by $N^{(k)}=k f(k)+N^{(k-1)}$ with $N^{(0)}=0$ and $f(k)$ is the count of edges which appear $k$ times in the multiset $\ca{K(T_i)}_i$.
\end{theorem}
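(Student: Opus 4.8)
Throughout, the plan is to factor the bound through three ingredients: a deterministic reduction from the Lipschitz error to a colour-collision count, a combinatorial estimate on one uniform proper colouring $c$---the object over which the theorem's outer $\E$ is taken---and a Good-Turing-type concentration inequality over the draw of the $n$ samples. For the first ingredient, note that $\chi(\vx)$ agrees with $\vx$ except on the coordinates it zeroes out, so $\norm{\vx-\chi(\vx)}_1$ is exactly the collision count $\mathrm{CC}(\vx)=\card{T}-\card{cT}$ with $T=\supp\vx$; Hamming-Lipschitzness gives $\card{f(\vx)-f(\chi(\vx))}\le L\,\mathrm{CC}(\vx)$ pointwise, and integrating this against $\mu$ and interchanging the colouring-expectation with the $\mu$-integral (Tonelli, nonnegative integrand) reduces the claim to bounding $\E_{\vx\sim\mu}\,\E\,\mathrm{CC}(\vx)$. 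Grouping $T$ by colour class gives the pointwise-in-$c$ bound $\mathrm{CC}(\vx)\le\sum_{e\in K(T)}\indicator\ca{c\text{ monochromatic on }e}$ (a colour class of size $r$ inside $T$ contributes $r-1\le\binom r2$ monochromatic edges), so $\E\,\mathrm{CC}(\vx)\le\sum_{e=\ca{u,v}\in K(T)}\P[c(u)=c(v)]$, and everything reduces to estimating these pair-collision probabilities (probability over $c$).

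If $e\in E^{(k)}$ then $u$ and $v$ are adjacent in $G^{(k)}$ and $\P[c(u)=c(v)]=0$ since $c$ is proper, so only non-edges $e=\ca{u,v}\notin E^{(k)}$ contribute. For such a pair I would condition the uniform proper $m$-colouring on its restriction to $V\setminus\ca{u,v}$: since $m>\Delta^{(k)}$ every such partial colouring extends to $G^{(k)}$, and since $u\not\sim v$ the colours of $u$ and $v$ are then conditionally independent, each uniform over a set of at least $m-\Delta^{(k)}$ admissible colours, so $\P[c(u)=c(v)]$---the number of jointly admissible colours divided by the product of the two admissible-set sizes---is at most $\pa{m-2\Delta^{(k)}}^{-1}$ (in fact $\pa{m-\Delta^{(k)}}^{-1}$, but the weaker constant is all that is needed). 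Substituting back gives
\[
\E\norm{f-f\circ\chi}_{L^1(\mu)}\le\frac{L}{m-2\Delta^{(k)}}\,\E_{\vx\sim\mu}\card{K(T)\setminus E^{(k)}}.
\]

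It remains to show that, with probability at least $1-\delta$ over the sample, $\E_{\vx\sim\mu}\card{K(T)\setminus E^{(k)}}\le N^{(k)}/n+k\eta^2\log\delta^{-1}/\sqrt n$. In expectation the error term is unnecessary: writing $p_e=\P_{T\sim\mu}[e\in K(T)]$, the quantity $\E_{\vx\sim\mu}\card{K(T)\setminus E^{(k)}}$ has sample-expectation $\sum_e p_e\,\P[\Binomial(n,p_e)<k]$, while grouping $\bigcup_iK(T_i)$ by edge multiplicity---which simultaneously produces the recursion $N^{(k)}=kf(k)+N^{(k-1)}$, $N^{(0)}=0$---together with $\tfrac jn\binom nj=\binom{n-1}{j-1}$ gives $\tfrac1n\E N^{(k)}=\sum_e p_e\,\P[\Binomial(n-1,p_e)<k]$; since adding a trial only decreases $\P[\Binomial(\cdot,p_e)<k]$, the leave-one-out estimator overcounts in expectation---precisely the $E^{(k)}_i\subset E^{(k)}$ remark of the main text. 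Upgrading to a high-probability bound requires concentration of both $N^{(k)}/n$ and the threshold-$k$ missing mass $\sum_e p_e\indicator\ca{\#(e)<k}$ around their means. Replacing one example $T_i$ changes the multiplicity of at most $2\binom\eta2\le\eta^2$ edges by one, and near the threshold each such change moves $N^{(k)}$ by up to $k$ (the source of the $k\eta^2$ factor); but a direct bounded-differences bound on the missing-mass term degrades like $\sqrt n$ rather than $1/\sqrt n$, so I would instead adapt the refined martingale analysis of Good-Turing estimators of \cite{mcallester2000convergence}---separating frequent edges, which are almost surely seen at least $k$ times and hence contribute nothing, from rare edges, whose small $p_e$ forces small increments---to (i) the threshold-$k$ generalisation of missing mass and (ii) the fact that a single sample emits the whole clique $K(T_i)$ of up to $\binom\eta2$ correlated edges rather than a single token. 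Carrying the $O(1/\sqrt n)$ McAllester--Schapire rate through this correlated, multi-edge token structure is the step I expect to be the crux; once it is in place the first two ingredients are routine, and combining the three parts on the event $m>2\Delta^{(k)}$ gives the theorem.
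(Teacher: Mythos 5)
Your first two ingredients coincide with the paper's proof: the Lipschitz-to-collision-count reduction, and the bound $\pa{m-2\Delta^{(k)}}^{-1}$ on the collision probability of a pair not adjacent in $G^{(k)}$ (your conditional-independence argument, conditioning on the coloring of $V\setminus\ca{u,v}$, is in fact slightly sharper, giving $\pa{m-\Delta^{(k)}}^{-1}$). The divergence---and the genuine gap---is in the final step. You set out to prove a high-probability bound on the sample-conditional missing edge mass $\E_{T\sim\mu}\card{K(T)\setminus E^{(k)}}$, correctly observe that a naive bounded-differences bound on that quantity is vacuous, and then defer the crux to an unproven adaptation of the McAllester--Schapire missing-mass analysis to threshold-$k$ counts with correlated clique emissions. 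As written, the proposal therefore does not establish the theorem: its central concentration step is only conjectured.

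The paper never concentrates the missing mass at all. It bounds the expected (over the sample) missing mass by the expected estimator via leave-one-out exchangeability: $\card{K(T)\setminus E^{(k)}}\le\card{K(T)\setminus E^{(k)}_i}$ because $E^{(k)}_i\subset E^{(k)}$, and $\card{K(T)\setminus E^{(k)}_i}\disteq\card{K(T_i)\setminus E^{(k)}_i}$ since $T\disteq T_i$ and both are independent of $E^{(k)}_i$; averaging over $i\in[n]$ gives $\E\card{K(T)\setminus E^{(k)}}\le\frac{1}{n}\E N^{(k)}$. The only concentration then required is a one-sided McDiarmid bound on the observable $N^{(k)}$ itself, a function of $T_1,\dots,T_n$ with bounded differences $k\eta^2$: replacing $T_i$ changes its own summand by at most $\eta^2$, while each edge $e\in K(T_i)$ can raise another summand by at most one, and only if $\#(e)\le k$, i.e.\ for at most $k-1$ other indices per edge. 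This is exactly the source of the $k\eta^2\log\delta^{-1}/\sqrt{n}$ term; no refined missing-mass machinery is needed. (Your worry does point at a real subtlety---the paper's chain takes expectations over the sample before invoking concentration, so strictly it is the estimator, not the realized conditional missing mass, that is controlled---but to obtain the theorem as the paper proves it, the McDiarmid-on-$N^{(k)}$ route closes the argument, whereas your route remains open until the correlated, threshold-$k$ concentration result is actually supplied.)
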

\begin{proof}
We first reduce our error to the collision count. Consider a new, independent sample $\vx\sim \mu$ and its associated set $T\subset[D]$ of active indices, yielding
\begin{align*}
    \E\norm{f-f\circ \chi}_{L^2(\mu)}&=\E\abs{f(\vx)-f\circ\chi(\vx)} & \vx\sim\mu \\
    &\le L\E\norm{\vx-\chi(\vx)}_1 & \text{$f$ Hamming-Lipschitz}\\
    &\le L\E \mathrm{CC}\pa{G^{(k)}, T} & \text{$\chi$ definition}\,\,,
\end{align*}
where $\mathrm{CC}(G^{(k)}, T)$ is the color collision count with respect to the coloring $c$ of $G^{(k)}$, equal to $\card{T}-\card{cT}$. Next, every edge in $K(T)\setminus E^{(k)}$ contributes to at most 1 color collision, so
$$
\E{\mathrm{CC}\pa{G^{(k)}, T}}\le\E\ha{\sum_{xy\in K(T)\setminus E^{(k)}}\CE{1\ca{c(x)= c(y)}}{T_{1:n},T}}\,\,,
$$
where we also apply the tower property. Edges in $K(T)\cap E^{(k)}$ cannot result in a color collision because $c$ is proper. Further, since $c$ sampled independently of $T_{1:n},T$ among all uniform $m$-colorings, for any two vertices $x,y$, which each have at most $\Delta^{(k)}$ neighbors in $G^{(k)}$, there are at least $m-2\Delta^{(k)}$ colors which $x,y$ may be assigned, so $\CE{1\ca{c(x)= c(y)}}{T_{1:n},T}\le \pa{m-2\Delta^{(k)}}^{-1}$. Applying this bound above yields
$$
\E{\mathrm{CC}\pa{G^{(k)}, T}}\le \pa{m-2\Delta^{(k)}}^{-1}\E{\card{K(T)\setminus E^{(k)}}}\,\,.
$$
We turn our attention to the last term $\E{\card{K(T)\setminus E^{(k)}}}$, which is the expected count of new edges. $\card{K(T)\setminus E^{(k)}}\le \card{K(T)\setminus E^{(k)}_i}$ per $E^{(k)}_i\subset E^{(k)}$ and $\card{K(T)\setminus E^{(k)}_i}\disteq \card{K(T_i)\setminus E^{(k)}_i}$ since $T\disteq T_i$ but both are independent of $E^{(k)}_i$. Summing $n$ such terms over $i\in[n]$, we have
$$
\E{\card{K(T)\setminus E^{(k)}}}\le\frac{1}{n}\E N^{(k)}=\frac{1}{n}\E \sum_i\card{K(T_i)\setminus E_i^{(k)}}\,\,,
$$
At this point, combining all of our inequalities, we have shown
$$
\E \norm{f-f\circ \chi}_{L^1(\mu)}\le \frac{L}{m-2\Delta^{(k)}}\E\frac{N^{(k)}}{n}\,\,.
$$
To finish, we show that $\frac{1}{n}N^{(k)}$ concentrates about its mean by applying McDiarmid's inequality, recognizing $N^{(k)}$ is a function of $n$ arguments $T_i\in2^{D}$ satisfying $\card{T_i}\le \eta$. We focus on showing bounded differences by replacing $T_i$ with some $T_i'\subset[D]$, also with $\card{T_i'}\le \eta$, yielding new $\pa{E^{(k)}_j}'=\pa{E^{(k)}_j\setminus K(T_i)}\cup K(T_i')$ for $j\neq i$. The $i$-th term of the sum $N^{(k)}$ changes by $\card{K(T_i')\setminus \pa{E^{(k)}_i}'}-\card{K(T_i)\setminus E^{(k)}_i}$, which is bounded by $\eta^2$, as $K(T_i'),K(T_i)$ are bounded by that size. Next, the remaining $j$-th terms of $N^{(k)}$ for $j\neq i$, $\card{K(T_j)\setminus \pa{E^{(k)}_j}'}$, can only increase by at most one relative to $\card{K(T_j)\setminus E^{(k)}_j}$ for every edge $e\in K(T_i)$. If $\#(e)>k$, then $e\in \pa{E^{(k)}_j}'$ regardless. Otherwise, an edge $e$ in $K(T_i)$ is only absent from $E^{(k)}_j$ if it appears in at most $k-1$ other sets $K(T_j)$. Thus, replacing $T_i$ with $T_i'$ increases $N^{(k)}$ by at most $k\eta^2$. The alternative formulation for $N^{(k)}$ follows by recognizing that it can be computed explicitly as the sum of the count of edges which appear exactly $j\in[k]$ times.\end{proof}

\begin{figure}[h]
  \includegraphics[width=\columnwidth]{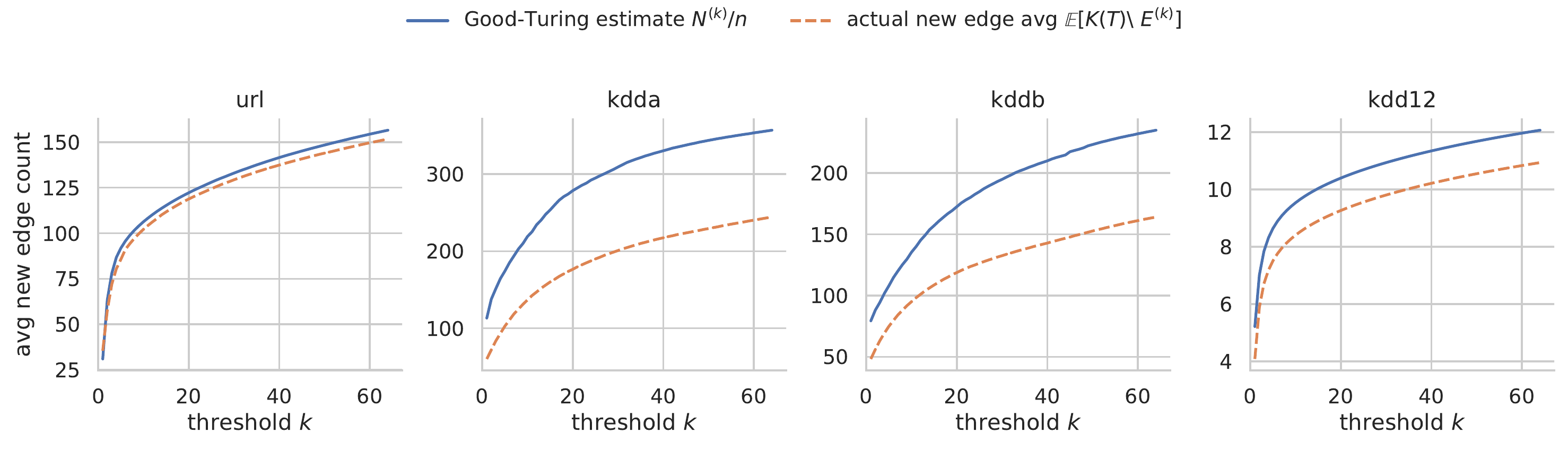}
  \caption{As the sparsity of the reduced graphs $G^{(k)}$ increases for increasing $k$ across the datasets, we notice an increase in the number of new edges introduced by unseen points. Luckily, the estimated new edge count $\frac{1}{n}N^{(k)}$ provides a tractable estimator for the upper bound of the average number of new edges an example introduces, $\E\ha{K(T)\setminus E^{(k)}}$. The empirical estimate of $\E\ha{K(T)\setminus E^{(k)}}$ is based on held-out test data.
}\label{fig:gtworks}
\end{figure}

\begin{figure}
  \includegraphics[width=\columnwidth]{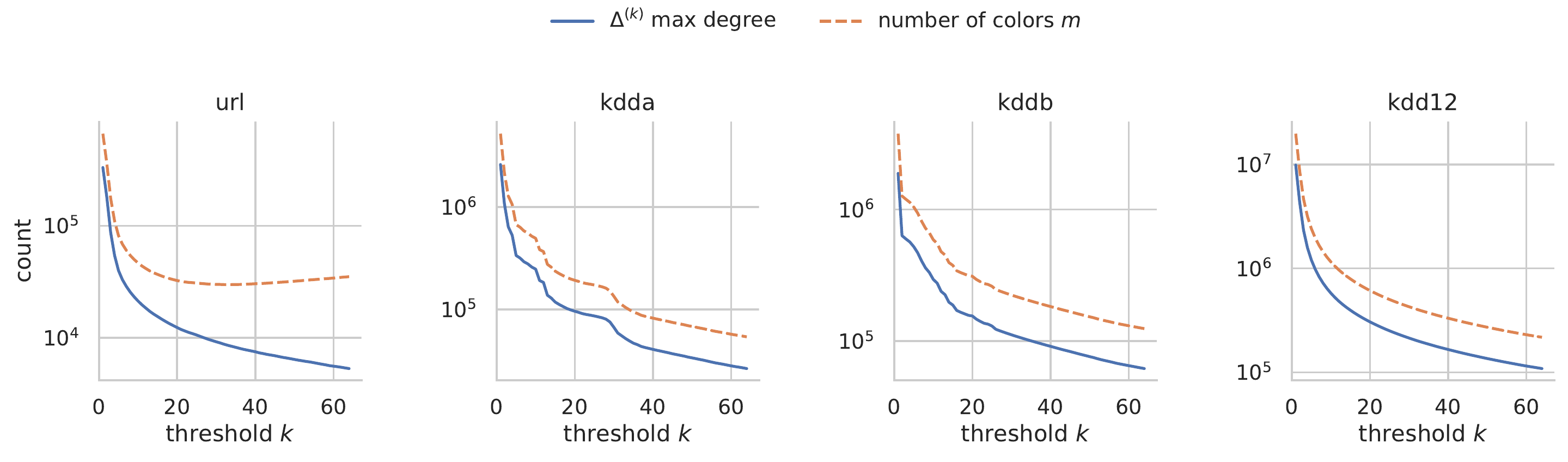}
  \caption{Across all datasets and different thresholds $k$, a large maximum degree $\Delta^{(k)}$ on the reduced co-occurrence graphs pushes the number of requisite colors up as well.
  To have error $L$ for $L$-Lipschitz functions with $99\%$ probability, one requires $m=2\Delta^{(k)}+\frac{N^{(k)}}{n}+\frac{k\eta^2\log 100}{\sqrt{n}}$ colors, which does not reduce dimensionality.
}\label{fig:maxdeghigh}
\end{figure}

Given Thm.~\ref{thm:guarantee}, a smooth kernel machine $k(\cdot, \alpha)$ over $2^D$ can be approximated by one over the chromatic representation $C$. The flexibility afforded by dealing with sparser graphs $G^{(k)}$ allows one to progressively increase $k$ as $n$ increases: the Good-Turing term $N^{(k)}$ provides a data-dependent estimate of unseen edge mass per new example (Fig.~\ref{fig:gtworks}).

Based on Thm.~\ref{thm:guarantee}, the number of requisite colors is at least $2\Delta^{(k)}$. Unfortunately, in practice $\Delta^{(k)}$ tends to be large due to a few vertices with high degree (Fig.~\ref{fig:maxdeghigh}).

By filtering out a small set of vertices $W\subset V$ and applying CL to the induced graph $G_f$, which is $G$ with vertices $W$ removed, we can reduce the maximum degree of $G_f$ significantly, in turn dropping the color lower bound. Vertices in the hold-out set may be colored with their own unique colors. As a result, the lower bound on the number of colors necessary for fidelity $L$ with probability at least 99\% based on Thm.~\ref{thm:guarantee} is $m_f=\card{W}+2\Delta_f^{(k)}+\frac{N_f^{(k)}}{n}+\frac{k\eta^2\log 100}{\sqrt{n}}$, where $G_f^{(k)}$ is the induced graph resulting from removing $W$ from $G^{(k)}$, $\Delta_f^{(k)}$ is its maximum degree, and $\frac{1}{n}N_f^{(k)}$ is the Good-Turing estimator for the average new edge count for unseen test examples on the filtered graph $G_f^{(k)}$. The Good-Turing estimates remain valid, even with filtering (Fig.~\ref{fig:gtfilter}).

To choose the set $W$ which maximally reduces the degree, we require a largest-first order $\sigma$ of the vertices, where $\sigma(i)$ is a vertex of maximum degree in the induced graph on vertices $\sigma(i)\cdots\sigma\pa{\card{V}}$. Such an ordering may be computed in time linear in the graph size \cite{matula1983smallest}.

We remove vertices in largest-first order until the number of removed vertices $\card{W}$ is at least twice the maximum degree of the remaining graph $2\Delta_f^{(k)}$. This results in a significantly smaller number of requisite colors $m_f$ across all thresholds $k$ compared to $m$, even when including the $\card{W}$ distinct colors used for representing high-degree vertices (Fig.~\ref{fig:reducedcolors}).

\begin{figure}
  \includegraphics[width=\columnwidth]{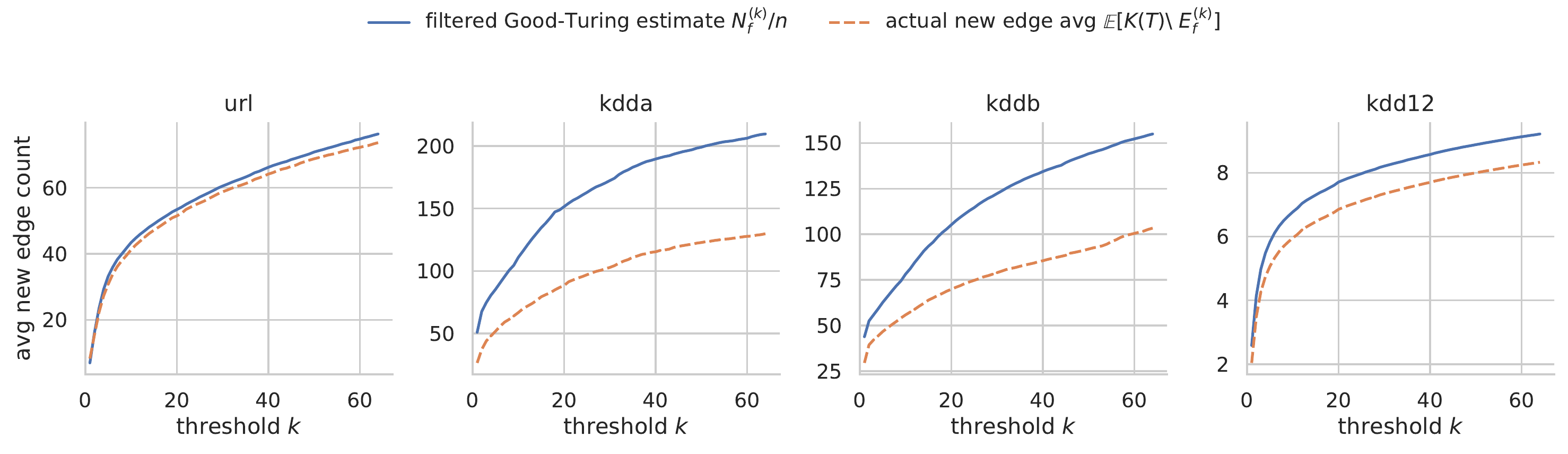}
  \caption{The filtered graphs $G^{(k)}_f$ exclude high-degree vertices from $G^{(k)}$ increases for increasing $k$ across the datasets and as a result have much lower average counts of new edges appearing in held-out data. The Good-Turing estimator $\frac{1}{n}N_f^{(k)}$ applied to the subgraphs remains a valid high-probability upper bound.
}\label{fig:gtfilter}
\end{figure}

\begin{figure}
  \includegraphics[width=\columnwidth]{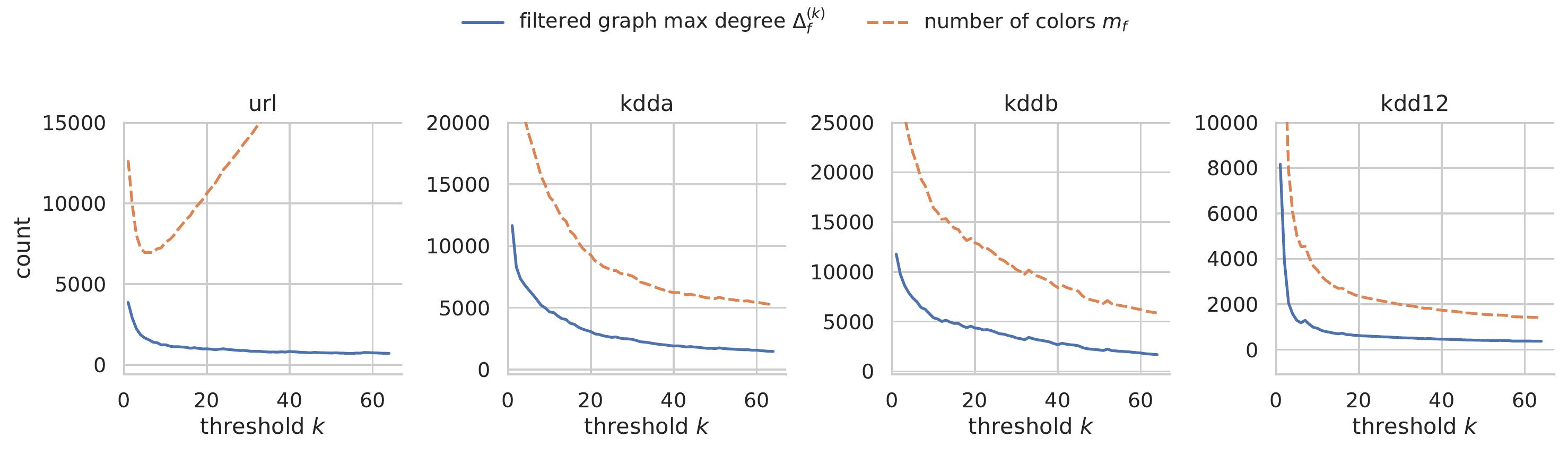}
  \caption{For each thresholded graph $G^{(k)}$, across different thresholds $k$, we use distinct colors for held-out vertices $W$ and use the chromatic representation for the filtered graph $G^{(k)}_f$, which is the induced subgraph of $G^{(k)}$ that excludes $W$. Because the graphs for the above datasets have few high-degree vertices, the filtering strategy significantly reduces the number of requisite colors $m_f$ (defined in the text) for achieving $L$ function approximation error with probability 99\% compared to using no filtering. Note that \texttt{url} quickly starts increasing with $k$ because of its relatively high max nnz term, $\eta^2$.
}\label{fig:reducedcolors}
\end{figure}

End-to-end, we can achieve low color collisions, and thus a high-fidelity representation of the original dataset, independent of the distribution of the data, through the chromatic representation by:
\begin{enumerate}
    \item Choosing an appropriate filtering threshold $k$, given the training set size $n$.
    \item Collecting the co-occurrence graph $G$ (Sec.~\ref{sec:graph-constr}). Using a rolling set of bloom filters allows one to collect the smaller $G^{(k)}$ directly \cite{mcmahan2013ad}, by doing a preliminary pass with $k$ bloom filters tracking whether each edge appeared at least $k$ times, rolling an edge to the next filter if it's present in the prior one, and then doing another pass over the data, filtering by the last bloom filter.
    \item Computing the largest-first ordering of $G^{(k)}$ \cite{matula1983smallest}.
    \item Identifying the smallest prefix $W$ of the largest-first ordering which has size at least twice the maximum degree $\Delta_f^{(k)}$ of the subgraph $G^{(k)}_f$ induced by the rest of the ordering.
    \item Uniformly sampling a coloring of $G^{(k)}_f$ from at least $m\ge 2\Delta_f+\frac{N_f^{(k)}}{n}$ colors by simulating Glauber dynamics for $\tilde O\pa{m\pa{\card{V}-\card{W}}}$ steps \cite{jerrum} and coloring each vertex in $W$ with new colors.
    \item Using the color map to perform categorical encoding, through embeddings or submodular feature compression.
\end{enumerate}

The transition dynamics for the Glauber Markov chain are determined by each vertex's neighborhood. The Markov blanket for each vertex is exactly its neighbors, so MCMC simulation can sample multiple vertices at once. The filtered graph has $v=\card{V}-\card{W}$ vertices with average degree $d=2\card{E_f^{(k)}}/v$. By a birthday paradox calculation, with $P$ such simultaneous MCMC updates, we can expect contention on less than a single vertex at any given time, so long as $P=O\pa{\nicefrac{\sqrt{v}}{d}}$.
Although the end-to-end uniform coloring process above uses more colors and computation than the greedy approach in the main text, it has far fewer color collisions (Fig.~\ref{fig:unifcc}).

\begin{figure}
  \includegraphics[width=\columnwidth]{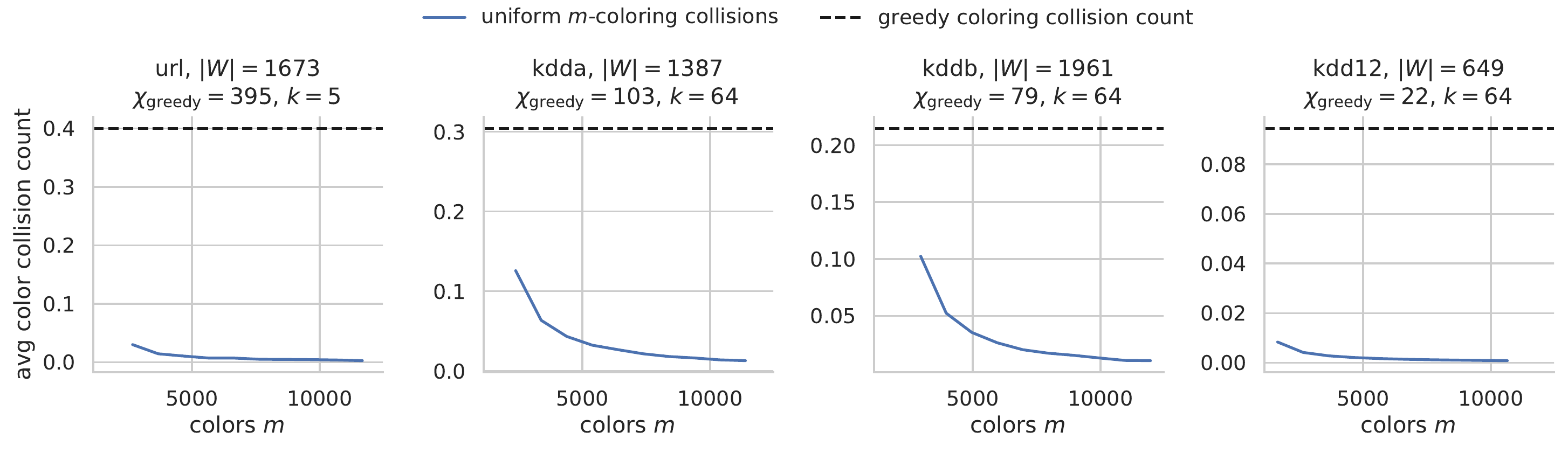}
  \caption{The above demonstrates the estimated average color collision count against the number of colors used when sampling a uniform coloring. As expected per Thm.~\ref{thm:guarantee}, an increase in color count results in a hyperbolic decrease in average color collision count, as estimated by the test set for each dataset. The size of the prefix $W$ of the largest-first ordering of each reduced co-occurrence graph as chosen by the rule $\card{W}\ge 2\Delta_f^{(k)}$ described aobve is specified for each dataset. The number of colors used by the greedy algorithm is given by $\chi_{\mathrm{greedy}}$. The thresholds $k$ for reducing the co-occurrence graphs were chosen based on Fig.~\ref{fig:reducedcolors}.
}\label{fig:unifcc}
\end{figure}

More efficient color sampling on random graphs is an active area of research, suggesting that slightly modified Glauber dynamics can mix quickly with fewer colors, even in the chance presence of high-degree vertices \cite{dyer2006randomly}. Such work leads to opportunities that obviate the need for the discrete representation of a prefix of largest-first vertices used to lower the maximum degree on the filtered graph $G^{(k)}_f$.

\section{Additional Practical Notes}

Note that care must be taken to not resize under a lock to maintain the expected contention time guarantee, which can be achieved by using incremental resizing, i.e., when a hash table is full, create a new one of double the size, and keep the old, moving an edge over from old to new on every insert rather than all at once during the resize. However, we found that this optimization was not critical for efficient performance in practice.

We also note that, while HT did not perform well relative to the heuristic FT in Sec.~\ref{sec:dimred}, it nonetheless provides a valuable technique for reducing memory usage in settings where a single index over features is unavailable. Frequently, sparse data is encoded in string/value pairs, without a global index. HT points us to a low-memory way of computing such an index. We construct a mapping using an ordinary hash table from 64-bit hashes of feature strings as keys themselves to 32-bit index. This can be done in a preliminary pass, in a parallel fashion, similar to Sec.~\ref{sec:graph-constr}, and yields a compact mapping. With such a large hash, collisions are exceedingly rare, with the probability of a collision among a billion features being less than 3\%, by a birthday paradox calculation.

Experiment code, configuration, and scripts to download the datasets are available in the GitHub repository \href{https://github.com/sisudata/chromatic-learning}{sisudata/chromatic-learning}.

\end{document}